\documentclass{article}
\usepackage{packages/extra_packages}
\usepackage[preprint]{packages/neurips}
\usepackage{amsthm}
\title{RunningTensor: Generalizing Linear Attention to Higher-Order Recurrent States
}

\author{%
Luca Herranz-Celotti$^1$, 
\textbf{Ermal Rrapaj}$^2$, \textbf{Vincent Guigue}$^1$   \\
$^1$ISIR Sorbonne, France,
$^2$Berkeley Lab, US\\
\texttt{luca.herranzcelotti@sorbonne-universite.fr, ermalrrapaj@lbl.gov}, \\
\texttt{vincent.guigue@agroparistech.fr}
}
\date{}

\begin{document}

\maketitle
\begin{abstract}
Linear attention and state-space models provide linear-time sequence modeling, but their recurrent memory remains a second-order tensor (a matrix), limiting the order of interactions that can be represented in the state. We introduce the RunningTensor, which generalizes this memory to an order-$o$ tensor, updated by a rank-1 outer product and read by contracting against $o-1$ vector queries. Order $2$ recovers linear attention; we study order $3$ as a proof of concept, retaining both recurrent and parallel forms while remaining linear in sequence length $T$ and improving working memory capacity from $\mathcal{O}(W^2)$ to $\mathcal{O}(W^o)$. On synthetic multi-query associative recall, RunningTensor outperforms linear-attention and SSM baselines. After pretraining, it also improves performance on language-understanding and non-synthetic retrieval tasks, suggesting that higher-order recurrent state can provide useful additional memory capacity beyond matrix-valued state.
\end{abstract}

\section{Introduction}

\setlength{\intextsep}{0pt}
\begin{wrapfigure}{r}{0.38\columnwidth}
    \vspace{-2.25\baselineskip}
    \centering
    \includegraphics[width=\linewidth]{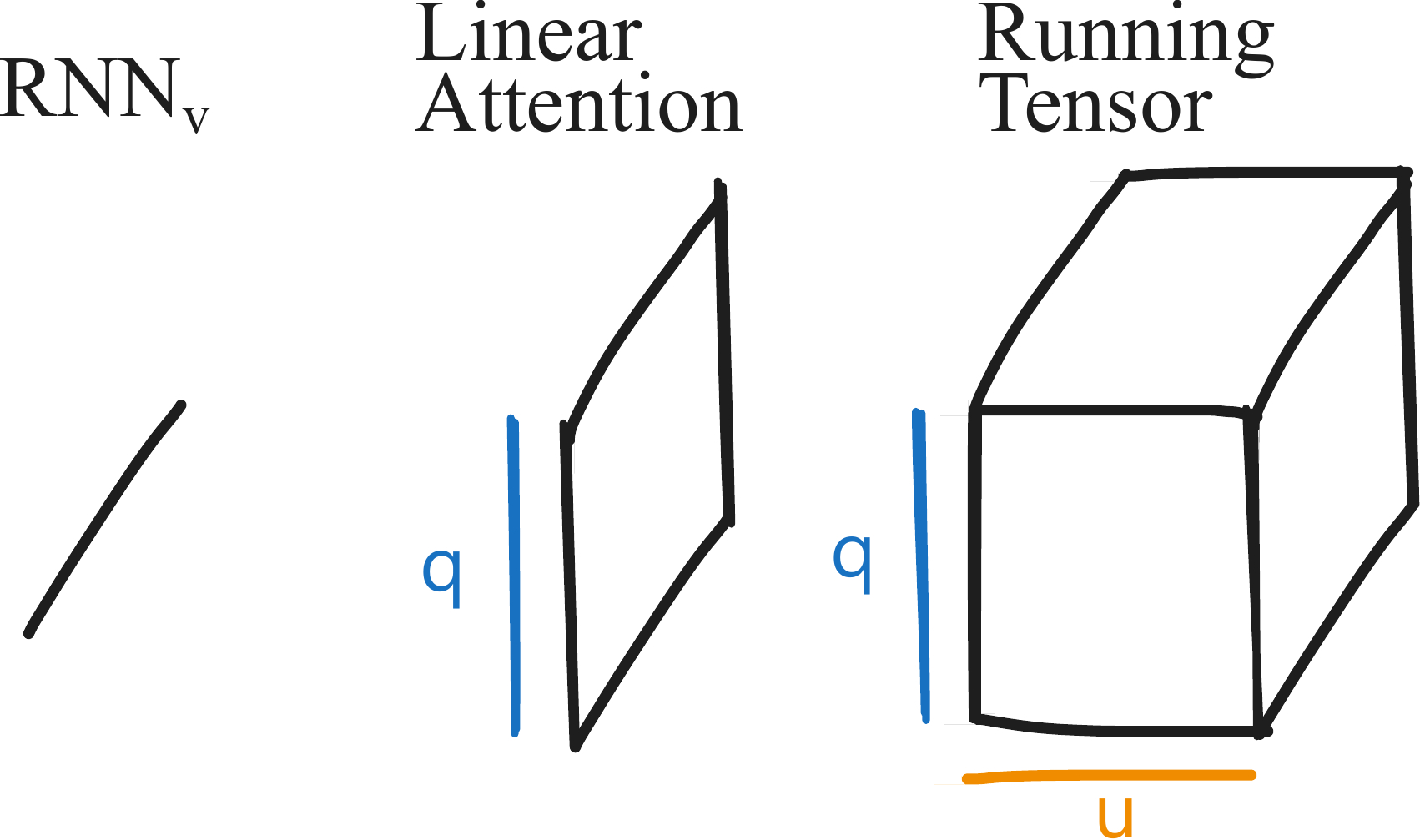}
    \captionsetup{skip=4pt,belowskip=20pt}
    \caption{\textbf{State geometry of linear-time recurrences.} A vector-state RNN ($\text{RNN}_v$) has linear memory capacity, linear attention has quadratic memory capacity given its matrix state, and RunningTensor can increase state size and therefore memory capacity to cubic and higher. One vector query $q$ contraction turns a matrix state into a vector state, but several vector queries $q,u$ are required to turn a tensor state into a vector output.}
    \label{fig:rt-state}
    \vspace{-1.25\baselineskip}
\end{wrapfigure}
Self-attention in Transformers \cite{Vaswani2017,Radford2018,Radford2019} retrieves from the past by materializing pairwise scores between all token positions. The cost is well-documented: that score matrix grows as $\mathcal{O}(T^2)$ in time and memory, which makes long-sequence processing computationally prohibitive.
Linearized attention \cite{Katharopoulos2020} and \emph{state-space models} (SSMs) such as S4 \cite{gu2021ssm}, Mamba \cite{mamba2}, Gated DeltaNet \cite{yang2025gated}, and Longhorn \cite{liu2025longhorn} recast the same retrieval as a linear recurrence with input-dependent gating. The formulation is dual: at inference the model ticks forward one step at a time in $\mathcal{O}(1)$ with respect to sequence length $T$; at training the recurrence unrolls into a matrix multiplication and can be parallelized. This removes the quadratic bottleneck in $T$, but it does not change the geometry of the memory. Linear attention accumulates key-value outer products into a matrix state $\mathbf{S}_t$ and reads through a single contraction $\mathbf{S}_t\mathbf{q}_t$, with a query vector $\mathbf{q}_t$. The summary of the past is therefore bounded by the size and rank of that matrix, which limits memory and expressivity of linearized attention. The question we ask is whether the recurrent state can be lifted beyond this second-order accumulation and single query, without giving up linear cost in $T$.

We propose the \emph{RunningTensor}, a causal sequence model whose state is a tensor of order $o\geq 2$. This work instantiates $o=3$ as a proof of concept: the state is updated by a rank-1 outer product and read by contracting against $o-1$ vector queries, and the construction maps to linear attention when $o=2$. Proposition~\ref{prop:la-special-case} shows that linear attention is contained as a special case, so RunningTensor is strictly more expressive once the extra modes are used. Writing $W$ for the width of each state mode, state size then follows a hierarchy indexed by tensor order: classic vector-state RNNs \cite{Hochreiter1997,gru2014} use $\mathcal{O}(W)$ memory, matrix-state linear attention uses $\mathcal{O}(W^2)$, and an order-$o$ RunningTensor uses $\mathcal{O}(W^o)$, while a full pass remains linear in $T$. The instantiated model adds data-dependent forget gates to reweight past outer products, query-skip gates that let a readout use both tensor axes, one, or none, and ConcOnlyC, which replaces the fused linear projection by a near-parameter-free channel mix.

Our contributions are as follows:
\begin{itemize}
    \item We propose the \emph{RunningTensor}, a linear-time causal sequence model whose state is a tensor of order $o\geq 2$. Order $2$ recovers linear attention; this work instantiates order $3$.
\item On a synthetic multi-query recall benchmark, RunningTensor improves over linear attention and SSM baselines.
\item After pretraining, it also improves on several language-understanding and non-synthetic retrieval downstreams.
\end{itemize}

\section{Methodology: The RunningTensor}

\subsection{Preliminary: Linear Attention}

The linear transformer \cite{Katharopoulos2020} can be written as a linear recurrence when normalization and query/key feature maps are omitted:
\begin{align*}
\mathbf{S}_t &= \mathbf{S}_{t-1} + \mathbf{v}_t \mathbf{k}_t^{\top} \in \mathbb{R}^{d_v \times d_k},
&
\mathbf{o}_t &= \mathbf{S}_t \mathbf{q}_t \in \mathbb{R}^{d_v},
\end{align*}
where $d_k$ and $d_v$ denote the per-head dimensions of the query/key and value vectors, $\mathbf{q}_t, \mathbf{k}_t^{\top}, \mathbf{v}_t$ are usually linear projections of an input representation $\mathbf{x}_t$, $\mathbf{S}_t$ is a matrix-shaped recurrent state and $\mathbf{o}_t$ the output of the unit.
Unrolling the recurrence yields equivalent vector and matrix forms. The sum is causal, which at the element level is a lower-triangular mask $M_{ti}=\mathbf{1}_{i\le t}$:
\begin{align*}
\mathbf{o}_t
&= \sum_{i=1}^{t} (\mathbf{v}_i \mathbf{k}_i^{\top}) \mathbf{q}_t
= \sum_{i=1}^{t} \mathbf{v}_i \bigl(\mathbf{k}_i^{\top} \mathbf{q}_t\bigr) = \sum_{i=1}^{T} M_{ti}\, \mathbf{v}_i \bigl(\mathbf{k}_i^{\top} \mathbf{q}_t\bigr)
\in \mathbb{R}^{d_v}, \\
&\implies
\mathbf{O}
= \bigl(\mathbf{Q} \mathbf{K}^{\top} \odot \mathbf{M}\bigr)\, \mathbf{V}
\in \mathbb{R}^{T \times d_v},
\end{align*}
where $\mathbf{Q}, \mathbf{K} \in \mathbb{R}^{T \times d_k}$, $\mathbf{V} \in \mathbb{R}^{T \times d_v}$, $\mathbf{M} \in \mathbb{R}^{T \times T}$ is the causal mask
and $\odot$ denotes element-wise multiplication. 
This formulation highlights the dual perspective: a recurrent structure enabling linear-time inference, and a matrix form that allows efficient parallel training.

\subsection{The RunningTensor architecture}

\textbf{The RunningTensor.}
To find a tensor analogue, we define a third-order state using the tensor outer product $\otimes$:
\begin{align*}
\boldsymbol{\mathcal{S}}_t
&= \boldsymbol{\mathcal{S}}_{t-1} + \mathbf{v}_t \otimes \mathbf{k}_t \otimes \mathbf{r}_t
\in \mathbb{R}^{d_v \times d_k \times d_r}
\end{align*}
with $\mathbf{v}_t \in \mathbb{R}^{d_v},
\mathbf{k}_t \in \mathbb{R}^{d_k},
\mathbf{r}_t \in \mathbb{R}^{d_r}$.
The same construction extends to order $o>3$: accumulate a rank-1 outer product of $o$ vectors and contract against $o-1$ queries. We restrict to $o=3$ for simplicity.

Contracting modes $2$ and $3$ with queries $\mathbf{q}_t$ and $\mathbf{u}_t$ gives:
\begin{align*}
(o_t)_a
&= \sum_{b=1}^{d_k} \sum_{c=1}^{d_r}
(\mathcal{S}_t)_{abc}\, (q_t)_b\, (u_t)_c,
\qquad a = 1,\ldots,d_v.
\end{align*}
Expanding the recurrence, again with the causal mask at the element level:
\begin{align*}
\mathbf{o}_t
&= \sum_{i=1}^{t}
\mathbf{v}_i\, (\mathbf{k}_i^{\top} \mathbf{q}_t)\, (\mathbf{r}_i^{\top} \mathbf{u}_t) = \sum_{i=1}^{T}
M_{ti}\,
\mathbf{v}_i\, (\mathbf{k}_i^{\top} \mathbf{q}_t)\, (\mathbf{r}_i^{\top} \mathbf{u}_t)
\in \mathbb{R}^{d_v}.\\
&\implies
\mathbf{O}
= \bigl( (\mathbf{Q} \mathbf{K}^{\top}) \odot (\mathbf{U} \mathbf{R}^{\top}) \odot \mathbf{M} \bigr)\, \mathbf{V}
\in \mathbb{R}^{T \times d_v}.
\end{align*}

\begin{proposition}[Linear attention as a special case]
\label{prop:la-special-case}
If $d_r=1$ and $\mathbf{u}_t=\mathbf{r}_t=1$ for all $t$, then the RunningTensor matrix form reduces to causal linear attention.
\end{proposition}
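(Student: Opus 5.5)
The plan is to substitute the hypotheses directly into the RunningTensor matrix form,
\[
\mathbf{O} = \bigl( (\mathbf{Q}\mathbf{K}^{\top}) \odot (\mathbf{U}\mathbf{R}^{\top}) \odot \mathbf{M} \bigr)\mathbf{V},
\]
and show that the middle Hadamard factor is the all-ones matrix. That factor is the identity for $\odot$, so it drops out.

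\textbf{Step 1 (the factor $\mathbf{U}\mathbf{R}^{\top}$).} With $d_r=1$, the stacked matrices $\mathbf{U},\mathbf{R}\in\mathbb{R}^{T\times 1}$ are both the all-ones column $\mathbf{1}_T$, because $\mathbf{u}_t=\mathbf{r}_t=1$ for every $t$. Hence $\mathbf{U}\mathbf{R}^{\top}=\mathbf{1}_T\mathbf{1}_T^{\top}=\mathbf{J}\in\mathbb{R}^{T\times T}$, the all-ones matrix. Entrywise, $(\mathbf{U}\mathbf{R}^{\top})_{ti}=\mathbf{r}_i^{\top}\mathbf{u}_t=1\cdot 1=1$.

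\textbf{Step 2 (the reduction).} Since $\mathbf{A}\odot\mathbf{J}=\mathbf{A}$ for any $\mathbf{A}\in\mathbb{R}^{T\times T}$, and the Hadamard product is associative and commutative, we get $\mathbf{O}=\bigl(\mathbf{Q}\mathbf{K}^{\top}\odot\mathbf{M}\bigr)\mathbf{V}$. This is exactly the causal linear attention matrix form from the preliminary subsection.

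\textbf{Step 3 (recurrent form, for completeness).} The same conclusion can be checked on the recurrent side. With $d_r=1$ the state $\boldsymbol{\mathcal{S}}_t\in\mathbb{R}^{d_v\times d_k\times 1}$ is canonically a $d_v\times d_k$ matrix. The update $\mathbf{v}_t\otimes\mathbf{k}_t\otimes 1$ becomes $\mathbf{v}_t\mathbf{k}_t^{\top}$, and contracting mode $3$ with $u_t=1$ leaves $(o_t)_a=\sum_b (\mathcal{S}_t)_{ab1}(q_t)_b$, i.e.\ $\mathbf{o}_t=\mathbf{S}_t\mathbf{q}_t$. This matches the linear attention recurrence.

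There is no real obstacle. The only point needing care is the identification of an order-$3$ tensor with a trivial third mode with a matrix, together with the interpretation of the scalar "$1$" as the length-one vector. I will state this identification explicitly before carrying out Step 1.
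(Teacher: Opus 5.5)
Your proposal is correct and matches the paper's proof: both note that $(\mathbf{U}\mathbf{R}^{\top})_{ti}=\mathbf{r}_i^{\top}\mathbf{u}_t=1$, so that Hadamard factor is the all-ones matrix and the matrix form collapses to $(\mathbf{Q}\mathbf{K}^{\top}\odot\mathbf{M})\,\mathbf{V}$. Your Step~3 is a correct extra check that the paper does not carry out. The paper instead writes the per-row readout $\mathbf{o}_t=\sum_{i} M_{ti}\,\mathbf{v}_i(\mathbf{k}_i^{\top}\mathbf{q}_t)$, while you verify the recurrence itself: with $d_r=1$ the tensor state reduces to the matrix $\mathbf{S}_t$ and the readout to $\mathbf{S}_t\mathbf{q}_t$.
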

\begin{proof}
With $d_r=1$ and $\mathbf{u}_t=\mathbf{r}_t=1$, one has $\mathbf{r}_i^{\top}\mathbf{u}_t=1$ and $(\mathbf{U}\mathbf{R}^{\top})_{ti}=1$. The expanded readout becomes
$\mathbf{o}_t = \sum_{i=1}^{T} M_{ti}\, \mathbf{v}_i (\mathbf{k}_i^{\top} \mathbf{q}_t)$,
and the matrix form collapses to $\mathbf{O}=(\mathbf{Q}\mathbf{K}^{\top}\odot\mathbf{M})\,\mathbf{V}$.
\end{proof}

\textbf{Forget gates ($\textcolor{lbg}{\alpha}$).}
Without decay, every past outer product enters the state with equal weight. As in several linear attention alternatives, we add data-dependent forgetting so earlier context can be weighted. Per head, meaning several times in parallel, we have
\begin{align*}
\boldsymbol{\mathcal{S}}_t
&= \textcolor{lbg}{\alpha_t}\boldsymbol{\mathcal{S}}_{t-1} + \mathbf{v}_t \otimes \mathbf{k}_t \otimes \mathbf{r}_t.
\end{align*}
Unrolling introduces $\textcolor{lbg}{\gamma_j} =
\prod^j_{i=1}\textcolor{lbg}{\alpha_i}$ and $\textcolor{lbg}{\Gamma_{ts}} = \textcolor{lbg}{\gamma_t}/\textcolor{lbg}{\gamma_s}$. In log space, $\textcolor{lbg}{\Gamma_{ts}} = \exp(\textcolor{lbg}{g_{t}} - \textcolor{lbg}{g_{s}})$ with $\textcolor{lbg}{g_t} = \log \textcolor{lbg}{\gamma_t}$, which is the factor used in the bilinear form.
We parameterize $\textcolor{lbg}{g}$ following Gated Linear Attention (GLA) \cite{yang2024gla}: from a raw per-head logit $\textcolor{lbg}{a_t}$ produced on the fused input path,
\begin{align*}
\textcolor{lbg}{g_t}
&= \frac{1}{\tau}\,\mathrm{log}\sigma(\textcolor{lbg}{a_t})
= \frac{1}{\tau}\log \frac{1}{1+e^{-\textcolor{lbg}{a_t}}},
\qquad \tau = 16,
\end{align*}
so $\textcolor{lbg}{g_t}\in(-\infty,0]$. The per-step multiplier is then $\textcolor{lbg}{\alpha_t} = \exp(\textcolor{lbg}{g_t} - \textcolor{lbg}{g_{t-1}}) = \bigl(\sigma(\textcolor{lbg}{a_t})/\sigma(\textcolor{lbg}{a_{t-1}})\bigr)^{1/\tau}$, which may exceed $1$, in contrast to forget gates constrained to $[0,1]$.
Expanding the sum gives the vector form and, stacking time rows, the parallel matrix form (per head):
\begin{align*}
\mathbf{o}_t
&= \sum_{i=1}^{t}
\mathbf{v}_i
\Bigl(
\textcolor{lbg}{\Gamma_{ti}}\,
\mathbf{k}_i^{\top} \mathbf{q}_t\,
\mathbf{r}_i^{\top} \mathbf{u}_t
\Bigr) = \sum_{i=1}^{T}
M_{ti}\,
\textcolor{lbg}{\Gamma_{ti}}\,
\mathbf{v}_i
\bigl(\mathbf{k}_i^{\top} \mathbf{q}_t\bigr)
\bigl(\mathbf{r}_i^{\top} \mathbf{u}_t\bigr), \\
&\implies
\mathbf{O}
= \Bigl(
(\mathbf{Q} \mathbf{K}^{\top})
\odot
(\mathbf{U} \mathbf{R}^{\top})
\odot \mathbf{M}
\odot \textcolor{lbg}{\boldsymbol{\Gamma}
}\Bigr)\, \mathbf{V}.
\end{align*}

\textbf{Query-skip gates ($\textcolor{xic}{\xi_i}$).}
The bilinear score $(\mathbf{k}_s^{\top}\mathbf{q}_t)(\mathbf{r}_s^{\top}\mathbf{u}_t)$ always queries both axis of the state. We provide the network the ability to decide when to query both axis, one, or none, resulting in a score of the form
\begin{align*}
\Bigl((1-\textcolor{xic}{\xi_{1,t}}) + \textcolor{xic}{\xi_{1,t}}\,\mathbf{k}_s^{\top}\mathbf{q}_t\Bigr)
\Bigl((1-\textcolor{xic}{\xi_{2,t}}) + \textcolor{xic}{\xi_{2,t}}\,\mathbf{r}_s^{\top}\mathbf{u}_t\Bigr),
\qquad \textcolor{xic}{\xi_{1,t}},\textcolor{xic}{\xi_{2,t}}\in(0,1),
\end{align*}
where $\textcolor{xic}{\xi_{i,t}}$ are per-head \emph{query-skip gates} at time $t$, and $i\in\{1,2\}$. Rather than evaluating the mixed score directly, we embed it in the existing recurrence via a homogeneous feature map: dummy slots are appended so that
\begin{align*}
\hat{\mathbf{q}}_t &= [(1-\textcolor{xic}{\xi_{1,t}}),\; \textcolor{xic}{\xi_{1,t}}\mathbf{q}_t],
&
\hat{\mathbf{k}}_s &= [1,\; \mathbf{k}_s],
&
\hat{\mathbf{u}}_t &= [(1-\textcolor{xic}{\xi_{2,t}}),\; \textcolor{xic}{\xi_{2,t}}\mathbf{u}_t],
&
\hat{\mathbf{r}}_s &= [1,\; \mathbf{r}_s],
\end{align*}
and the delta or additive core runs on $(\hat{\mathbf{q}},\hat{\mathbf{k}},\hat{\mathbf{u}},\hat{\mathbf{r}})$. In the full model we additionally $\ell_2$-normalize the expanded queries and keys so pairwise dot products remain bounded, improving delta rule stability.

\textbf{Delta rule $\Delta$ ($\textcolor{betac}{\beta}$).}
For the full model we replace pure accumulation with the \emph{tensor delta rule}, analogous to DeltaNet \cite{irie2021fastweight,yang2024deltanet} and Gated DeltaNet \cite{yang2025gated}, lifted to third order. Let $\langle \boldsymbol{\mathcal{S}}, \mathbf{k}, \mathbf{r}\rangle$ denote mode-wise contraction of $\boldsymbol{\mathcal{S}}\in\mathbb{R}^{d_v\times d_k\times d_r}$ with $\mathbf{k}$ and $\mathbf{r}$. With forget multiplier $\textcolor{lbg}{\alpha_t}$ as above, the recurrent update is
\begin{align*}
\boldsymbol{\mathcal{S}}_t
&= \textcolor{lbg}{\alpha_t}\boldsymbol{\mathcal{S}}_{t-1}
+ \textcolor{betac}{\beta_t}\Bigl(
\mathbf{v}_t - \bigl\langle \textcolor{lbg}{\alpha_t}\boldsymbol{\mathcal{S}}_{t-1}, \hat{\mathbf{k}}_t, \hat{\mathbf{r}}_t \bigr\rangle
\Bigr) \otimes \hat{\mathbf{k}}_t \otimes \hat{\mathbf{r}}_t, \\
\mathbf{o}_t
&= \bigl\langle \boldsymbol{\mathcal{S}}_t, \hat{\mathbf{q}}_t, \hat{\mathbf{u}}_t \bigr\rangle.
\end{align*}
The subtracted prediction $\langle \textcolor{lbg}{\alpha_t} \boldsymbol{\mathcal{S}}_{t-1}, \hat{\mathbf{k}}_t, \hat{\mathbf{r}}_t\rangle$ removes the component of $\mathbf{v}_t$ already represented along $(\hat{\mathbf{k}}_t,\hat{\mathbf{r}}_t)$, so the scalar overwrite gate $\textcolor{betac}{\beta_t}$ controls step size. Training uses a chunk-parallel UT/WY implementation of this recurrence; inference unrolls the same rule step by step.

\textbf{De-parameterize fused linear with ConcOnlyC.}
Queries, keys, values, and all the gates, share one projection rather than separate stacks. Let $\mathbf{x}\in\mathbb{R}^{T \times d_{\mathrm{model}}}$, the fused projection width is
\begin{align*}
d_{\mathrm{fused}}
&= H\bigl(2d_k + 2d_r + 2d_v + 4\bigr),
\end{align*}
for $(\mathbf{q},\mathbf{k},\mathbf{u},\mathbf{r},\mathbf{v},\textcolor{lbg}{a},\textcolor{betac}{b},\textcolor{xic}{z_1},\textcolor{xic}{z_2},\mathbf{g}^{\mathrm{out}})$, $H$ heads, where $\textcolor{lbg}{a}$ is the temporal gate, $\textcolor{betac}{b}$ the delta overwrite gate, $\textcolor{xic}{z_1},\textcolor{xic}{z_2}$ the raw query-skip channels, and $\mathbf{g}^{\mathrm{out}}\in\mathbb{R}^{H d_v}$ the output gate. Gated DeltaNet \cite{yang2025gated} uses a dense linear projection in $\mathbb{R}^{d_{\mathrm{model}}\times d_{\mathrm{fused}}}$.
Instead, we use what we call ConcOnlyC, to expand $\mathbf{x}$  without a dense  matrix. Channels are repeated and truncated (parameter-free tile),
\begin{align*}
\tilde{\mathbf{x}}_{t,:}
&=
\bigl[\mathbf{x}_{t,:},\; \mathbf{x}_{t,:},\; \ldots\bigr]_{1:d_{\mathrm{fused}}},
\end{align*}
then mixed along the expanded width. Viewing $\tilde{\mathbf{x}}\in\mathbb{R}^{T\times d_{\mathrm{fused}}}$ as a length-$d_{\mathrm{fused}}$ signal with $T$ channels (time as channels), we apply a depthwise 1D convolution of odd kernel $K_c=3$ with weights tied across time, so the same filter is applied independently at each $t$:
\begin{align*}
\mathrm{ConcOnlyC}(\mathbf{x})_{t,c}
&= \sum_{k=-(K_c-1)/2}^{(K_c-1)/2}
w_k\, \tilde{\mathbf{x}}_{t,\,c+k}.
\end{align*}
This replaces the fused linear, and $d_{\mathrm{model}}\cdot d_{\mathrm{fused}}$ parameters become $3$. As in Gated DeltaNet \cite{yang2025gated}, a depthwise causal short convolution of kernel $4$ along time then follows on the $(\mathbf{q},\mathbf{k},\mathbf{u},\mathbf{r},\mathbf{v},\textcolor{lbg}{a},\textcolor{betac}{b},\textcolor{xic}{z_1},\textcolor{xic}{z_2},\mathbf{g}^{\mathrm{out}})$ block. After the split, $\mathbf{q},\mathbf{k},\mathbf{u},\mathbf{r},\mathbf{v}$ pass through SiLU, and $\mathbf{q},\mathbf{k},\mathbf{u},\mathbf{r}$ are $\ell_2$-normalized per head before the query-skip feature map expands and re-normalizes them; $\mathbf{v}$ is not. The temporal logits $\textcolor{lbg}{a}$ become $\textcolor{lbg}{g}$ via $\mathrm{log}\sigma/\tau$ as above; the overwrite gate is $\textcolor{betac}{\beta}=\sigma(\textcolor{betac}{b})$ and the query-skip gates are $\textcolor{xic}{\xi_i}=\sigma(\textcolor{xic}{z_i})$. We do not apply $1/\sqrt{d}$ attention scaling since the $\ell_2$-normalization is already playing that role.

\paragraph{The RunningTensor Block.}
As in Gated DeltaNet \cite{yang2025gated}, we apply a multiplicative output gate after the recurrence via FusedRMSNormGated. In their configuration, the gate logits bypass the short convolution so local mixing does not blur them across time. In the full RunningTensor instead, the gate undergoes the same ConcOnlyC channel mix and causal short convolution as the queries and the other gates, yielding a time-local, context-aware modulation of the readout. After the delta (or additive bilinear) core returns $\mathbf{o}\in\mathbb{R}^{T \times H \times d_v}$, FusedRMSNormGated applies $\mathrm{RMSNorm}(\mathbf{o})\odot \mathrm{SiLU}(\mathbf{g}^{\mathrm{out}})$, then a linear map $H d_v\to d_{\mathrm{model}}$ produces the layer output. The decoder residual around the layer is added outside this map.
The RunningTensor sits in a Pre-LN decoder block of the same form as Llama \cite{Touvron2023llama}, Qwen2/3 \cite{yang2024qwen2,yang2025qwen3}, and Mistral \cite{jiang2023mistral}, replacing self-attention.

\subsection{Training and Optimization}

The model is trained end-to-end using next-timestep prediction with a cross-entropy loss.
For a fair comparison, all models are trained under identical conditions with
0.6B parameters on FineWeb-Edu 100BT \cite{penedo2024fineweb}. We
use the AdamW optimizer with a constant learning rate. The final learning rate is selected as the best performing one after one day of training on Qwen only, from the set $\{10^{-3}, 10^{-4}, 10^{-5}\}$, and then shared across all models. We use no weight decay, gradient clipping of 0.01, and one epoch over the full dataset. We also found that stabilizing the variance of activations across layers improves Qwen results. All models employ the Mistral tokenizer \cite{jiang2023mistral} with a vocabulary size of 32{,}000.

\section{Results}

\subsection{Synthetic retrieval (MQAR)}

We evaluate on multi-query associative recall (MQAR) \cite{arora2023zoology} at $d_{\mathrm{model}}=128$ with validation sequence lengths 1024, 2048, and 4096 tokens. Each cell reports the best validation F1 (\%) over 3 seeds and learning rates $\{10^{-2},\,10^{-3},\,10^{-4}\}$. Table~\ref{tab:rt-mqar} compares attention and SSM baselines to RunningTensor (ours).

\begin{table}[h]
\centering
\caption{MQAR validation F1 (\%) at $d_{\mathrm{model}}=128$ (Attention, SSM baselines, and RunningTensor; each cell reports the best validation F1 over 3 seeds and learning rates $\{10^{-2},\,10^{-3},\,10^{-4}\}$; 1024, 2048, and 4096 = MQAR validation sequence length (tokens); ms/step = mean validation forward-pass milliseconds per eval batch step (lower is faster); Params = non-embedding param count per layer; Avg = mean over 1024, 2048, and 4096.)}
\label{tab:rt-mqar}
\begingroup
\makeatletter
\@ifundefined{mqarSTMstrut}{\newsavebox{\mqarSTMstrut}}{}
\setlength{\aboverulesep}{0pt}
\setlength{\belowrulesep}{0pt}
\setlength{\extrarowheight}{0.15ex}
\begin{tabular}{lcc>{\columncolor{gray!20}}c>{\columncolor{gray!20}}c>{\columncolor{gray!20}}cc}
\noalign{\global\setbox\mqarSTMstrut\copy\@arstrutbox}
\toprule
\rule[-0.12ex]{0pt}{0pt}\textbf{Method} & \textbf{Params} & \textbf{ms/step} & \textbf{1024} & \textbf{2048} & \textbf{4096} & \textbf{Avg} \\
\midrule
\rule[-0.12ex]{0pt}{0pt}Attention & 934k & 333.5 & \textbf{100.0} & \textbf{99.9} & \textbf{100.0} & \textbf{100.0} \\
\midrule
\rule[-0.12ex]{0pt}{0pt}Gated $\Delta$ Net & 249k & 261.0 & 78.3 & 57.2 & 48.1 & 61.2 \\
\rule[-0.12ex]{0pt}{0pt}Mamba-2 & 284k & 252.5 & 0.6 & 0.1 & 0.0 & 0.2 \\
\rule[-0.12ex]{0pt}{0pt}GLA & 216k & 267.2 & 0.1 & 1.9 & 0.0 & 0.7 \\
\rule[-0.12ex]{0pt}{0pt}$\Delta$ Net & 249k & 355.7 & 98.5 & 94.5 & \textbf{100.0} & 97.7 \\
\midrule
\rule[-0.12ex]{0pt}{0pt}\textbf{RunningTensor (ours)} & 176k & 311.9 & \textbf{100.0} & \textbf{100.0} & \textbf{100.0} & \textbf{100.0} \\
\bottomrule
\end{tabular}
\makeatother
\endgroup
\end{table}

At matched $d_{\mathrm{model}}$ and fewer parameters than attention, RunningTensor matches perfect MQAR F1 across all evaluated lengths while remaining competitive in forward-pass latency, but faster than attention and faster than the best SSM. Matrix-state and second-order baselines (Gated $\Delta$ Net, Mamba-2, GLA) degrade sharply as context grows; $\Delta$ Net retains strong performance but falls short of the tensor delta variant at 2048 tokens.

\subsection{Component ablation}

Table~\ref{tab:rt-ablation} reports RunningTensor variants at $d_{\mathrm{model}}=128$ under the same protocol as Table~\ref{tab:rt-mqar}. GDN doesn't pass $\mathbf{g}^{\mathrm{out}}$ through the temporal filter, when we do, as in our full architecture, we point it out as $\mathbf{g}^{\mathrm{out}}$@t.

\begin{table}[h]
\centering
\caption{RunningTensor variants on MQAR validation F1 (\%) at $d_{\mathrm{model}}=128$ (same protocol as Table~\ref{tab:rt-mqar}; row labels match internal experiment summaries).}
\label{tab:rt-ablation}
\begingroup
\makeatletter
\@ifundefined{mqarSTMstrut}{\newsavebox{\mqarSTMstrut}}{}
\setlength{\aboverulesep}{0pt}
\setlength{\belowrulesep}{0pt}
\setlength{\extrarowheight}{0.15ex}
\adjustbox{max width=\textwidth,center}{%
\small
\setlength{\tabcolsep}{3.5pt}
\begin{tabular}{lcc>{\columncolor{gray!20}}c>{\columncolor{gray!20}}c>{\columncolor{gray!20}}cc}
\noalign{\global\setbox\mqarSTMstrut\copy\@arstrutbox}
\toprule
\rule[-0.12ex]{0pt}{0pt}\textbf{Method} & \textbf{Params} & \textbf{ms/step} & \textbf{1024} & \textbf{2048} & \textbf{4096} & \textbf{Avg} \\
\midrule
\rule[-0.12ex]{0pt}{0pt}RT w. Tensor $S_t$ + ConcOnlyC + $\xi_i$ + $\Delta$ + $g_{\mathrm{out}}$@t & 176k & 311.9 & \textbf{100.0} & \textbf{100.0} & \textbf{100.0} & \textbf{100.0} \\
\rule[-0.12ex]{0pt}{0pt}RT w. Tensor $S_t$ + ConcOnlyC + $\Delta$ + $g_{\mathrm{out}}$@t & 176k & 322.0 & 99.9 & \textbf{100.0} & \textbf{100.0} & \textbf{100.0} \\
\rule[-0.12ex]{0pt}{0pt}RT w. Tensor $S_t$ + ConcOnlyC + $\xi_i$ + $g_{\mathrm{out}}$@t & 176k & 257.2 & 98.9 & 99.0 & 97.9 & 98.6 \\
\rule[-0.12ex]{0pt}{0pt}RT w. Tensor $S_t$ + ConcOnlyC + $\Delta$ & 175k & 313.7 & 99.7 & \textbf{100.0} & \textbf{100.0} & 99.9 \\
\noalign{\global\setbox\@arstrutbox\hbox{}\global\extrarowheight\z@}
\multicolumn{1}{c}{} & \multicolumn{1}{c}{} & \multicolumn{1}{c}{} & \multicolumn{1}{>{\columncolor{gray!20}}c}{\rule{0pt}{0.64em}} & \multicolumn{1}{>{\columncolor{gray!20}}c}{} & \multicolumn{1}{>{\columncolor{gray!20}}c}{} & \multicolumn{1}{c}{} \\
\noalign{\global\setbox\@arstrutbox\copy\mqarSTMstrut\global\extrarowheight=0.15ex}
\rule[-0.12ex]{0pt}{0pt}RT w. Tensor $S_t$ + ConcOnlyC & 175k & 284.7 & 99.4 & 96.2 & 99.8 & 98.5 \\
\rule[-0.12ex]{0pt}{0pt}RT w. Tensor $S_t$ + ConcOnly & 175k & 275.4 & 81.5 & 75.0 & 81.3 & 79.3 \\
\rule[-0.12ex]{0pt}{0pt}RT w. Tensor $S_t$ + OneLinear & 275k & 254.2 & 97.2 & 98.2 & 54.2 & 83.2 \\
\noalign{\global\setbox\@arstrutbox\hbox{}\global\extrarowheight\z@}
\multicolumn{1}{c}{} & \multicolumn{1}{c}{} & \multicolumn{1}{c}{} & \multicolumn{1}{>{\columncolor{gray!20}}c}{\rule{0pt}{0.64em}} & \multicolumn{1}{>{\columncolor{gray!20}}c}{} & \multicolumn{1}{>{\columncolor{gray!20}}c}{} & \multicolumn{1}{c}{} \\
\noalign{\global\setbox\@arstrutbox\copy\mqarSTMstrut\global\extrarowheight=0.15ex}
\rule[-0.12ex]{0pt}{0pt}RT w. Matrix $S_t$ + ConcOnlyC & 175k & 281.0 & 41.4 & 22.7 & 27.7 & 30.6 \\
\rule[-0.12ex]{0pt}{0pt}RT w. Matrix $S_t$ + ConcOnly & 175k & 268.6 & 44.3 & 4.9 & 13.0 & 20.7 \\
\rule[-0.12ex]{0pt}{0pt}RT w. Matrix $S_t$ + OneLinear & 249k & 260.6 & 51.4 & 10.0 & 19.0 & 26.8 \\
\bottomrule
\end{tabular}
}%
\makeatother
\endgroup
\end{table}

\section{Related Work}

\paragraph{Linear Attention and Fast Weight Mechanisms.}

A prominent direction for improving sequence modeling efficiency replaces quadratic self-attention with linear-time alternatives. Linformer reduces complexity by projecting keys and values into a fixed-dimensional space, while Linear Transformers reformulate attention using kernel-based similarity functions to enable associative computation \cite{wang2020linformer,Katharopoulos2020}. Performer further approximates softmax attention with random feature methods \cite{choromanski2020performer}. Later models enhance stability and expressivity through architectural modifications: RetNet introduces decay and rotational structure, and Gated Linear Attention incorporates learnable gating \cite{sun2023retnet,yang2023gla}. These approaches can be interpreted as instances of fast weight programmers, where weights are dynamically updated via input-dependent outer products, enabling implicit online adaptation during inference \cite{schlag2021linear}. This perspective connects to earlier work on fast weight memory and to recent findings that Transformers can perform implicit optimization while processing sequences \cite{schmidhuber1992fast,schmidhuber1993fast,vonoswald2023mesa}.

\paragraph{State Space Models and Gated Linear RNNs.}

State space models (SSMs) provide an alternative framework for efficient sequence modeling by structuring recurrence in a way that supports parallel computation. Early formulations rely on fixed or structured transition matrices that allow convolutional evaluation \cite{gu2021ssm,li2022convlong}. Subsequent models—including DSS, GSS, S5, H3, and Mamba—improve expressivity through diagonal, low-rank, or selective state transitions \cite{gupta2022dss,mehta2022gss,smith2022s5,fu2022h3,gu2023mamba}. Closely related are linear recurrent architectures such as LRU, HGRN, and RWKV, which share similar computational principles \cite{orvieto2023lru,qin2024hgrn,peng2023rwkv}. A key evolution in this space is the move from data-independent transitions to input-dependent gating mechanisms, inspired by classical gated RNNs but redesigned to remove dependence on previous hidden states, thereby preserving parallelism \cite{gers2000lstm,greff2015lstm,martin2018parallelizing}. This gating paradigm—also termed selective state updates—has become central to modern architectures such as Mamba and its successors.

\paragraph{Delta Rule, Online Learning, and Hybrid Architectures.}

Another unifying perspective frames these models as performing online learning through weight updates. The delta rule, used in DeltaNet, provides greater memory capacity than Hebbian-style updates and leads to flexible identity-plus-low-rank transition structures \cite{gardner1988memory,irie2021fastweight,yang2024deltanet}. These properties support improved reasoning and state tracking but also introduce stability and scalability challenges. Recent work addresses these limitations through parallelization strategies, normalization, and structured approximations \cite{yang2024deltanet,sun2024ttt}. Extensions incorporating nonlinear objectives or richer transition dynamics further enhance expressivity, though often at the cost of reduced parallel efficiency \cite{behrouz2024titans,grazzi2024negative,siems2025householder}. Finally, hybrid architectures that combine attention with linear recurrent or SSM layers—either across or within layers—have emerged as a practical way to balance efficiency and performance \cite{waleffe2024hybrid,hua2022hybrid,ren2024samba}.

\section{Conclusion}

\bibliographystyle{unsrt} 
\bibliography{sections/references}

\end{document}